\theoremstyle{plain}
\newtheorem{theorem}{Theorem}[section]
\newtheorem{corollary}{Corollary}[theorem]
\newcommand{\phase}[1]{\Statex \hspace{-\algorithmicindent}\textbf{#1}}
\newcommand{\keywords}[1]{%
  \par\addvspace\baselineskip\noindent
  \textbf{Keywords:}\enspace\ignorespaces#1
}
\newcommand{\blind}{0} 
\title{\bf Forecasting Geopolitical Events with a Sparse Temporal Fusion Transformer and Gaussian Process Hybrid: A Case Study in Middle Eastern and U.S. Conflict Dynamics}
\author{
    Hsin-Hsiung Huang\thanks{CONTACT Hsin-Hsiung Huaang hsin.huang@ucf.edu Department of statistics and Data Science, University of Central Florida, Orlando, FL.} \; and Hayden Hampton\\
    Department of Statistics and Data Science, University of Central Florida \\
    }
\date{} 
\begin{document}

\def\spacingset#1{\renewcommand{\baselinestretch}{#1}\small\normalsize}
\spacingset{1}

\if1\blind
{
  \title{\bf Forecasting Geopolitical Events with a Sparse Temporal Fusion Transformer and Gaussian Process Hybrid: A Case Study in Middle Eastern and U.S. Conflict Dynamics}
  \author{} 
  \maketitle
} \fi

\if0\blind
{
  \maketitle 
} \fi


\begin{abstract}
Forecasting geopolitical conflict from data sources like the Global Database of Events, Language, and Tone (GDELT) is a critical challenge for national security. The inherent sparsity, burstiness, and overdispersion of such data cause standard deep learning models, including the Temporal Fusion Transformer (TFT), to produce unreliable long-horizon predictions. We introduce STFT-VNNGP, a hybrid architecture that won the 2023 Algorithms for Threat Detection (ATD) competition by overcoming these limitations. Designed to bridge this gap, our model employs a two-stage process: first, a TFT captures complex temporal dynamics to generate multi-quantile forecasts. These quantiles then serve as informed inputs for a Variational Nearest Neighbor Gaussian Process (VNNGP), which performs principled spatiotemporal smoothing and uncertainty quantification. In a case study forecasting conflict dynamics in the Middle East and the U.S., STFT-VNNGP consistently outperforms a standalone TFT, showing a superior ability to predict the timing and magnitude of bursty event periods, particularly at long-range horizons. This work offers a robust framework for generating more reliable and actionable intelligence from challenging event data, with all code and workflows made publicly available to ensure reproducibility. 
\end{abstract}

\keywords{Conflict Prediction; Deep Learning; Overdispersion; Spatiotemporal Smoothing; Uncertainty Quantification.}
\vfill\pagebreak

\spacingset{1.5} 

\section{Introduction}\label{sec:intro}
The accurate and timely forecasting of geopolitical unrest is critical for diplomatic decision-making, humanitarian planning, and national security risk assessment \citep{ward2010perils}. This complex forecasting problem was the focus of the 2023 \emph{Algorithms for Threat Detection} (ATD) program, a challenge jointly organized by the U.S. National Science Foundation and the National Geospatial-Intelligence Agency. The ATD program is known for posing difficult, large-scale forecasting challenges on real-world data, from reconstructing vessel trajectories using sparse maritime sensors \citep{chen2023unsupervised} to detecting anomalies in traffic flows with extensive missing data \citep{he2023detection, yu2024smoothing}. The 2023 challenge, which this paper addresses, centered on producing multi-horizon forecasts of weekly event counts for 20 \textsc{CAMEO} event classes across thousands of spatio-temporal series extracted from the \emph{Global Database of Events, Language, and Tone} (GDELT) \citep{Leetaru2013}.

Forecasting on this data presents a formidable set of statistical challenges that render standard time series models inadequate. First, the task requires \textbf{multi-horizon forecasting}—predicting event counts not just one week ahead, but for several weeks into the future. This is notoriously difficult, as predictive uncertainty grows rapidly with the forecast horizon. Iterative methods suffer from error accumulation, while direct methods require training separate models for each horizon, increasing complexity and potentially ignoring temporal dependencies between horizons \citep{Lim2021}.

Second, the event data are \textbf{extremely sparse and zero-inflated}, with over 90\% of weekly observations being exact zeros. This high degree of sparsity violates the assumptions of classical models like ARIMA and poses significant optimization difficulties for complex, gradient-based deep learning architectures, which can struggle to learn meaningful patterns from infrequent positive signals \citep{brandt2001linear}.

Third, the non-zero counts are \textbf{bursty}: long periods of quiescence are punctuated by abrupt, large-magnitude spikes in activity. This burstiness is a well-documented feature of conflict data and social unrest, often following heavy-tailed distributions \citep{Bohorquez2009}. Global models, including modern deep learning architectures like the Temporal Fusion Transformer (TFT) \citep{Lim2021}, are trained to minimize average error across many series. Consequently, they tend to smooth over these localized, high-impact bursts, leading to systematic under-prediction during the very critical events that early-warning systems are designed to detect.

This paper introduces a hierarchical hybrid model designed to address this trifecta of challenges. Our approach, the Spatio-Temporal Fusion Transformer with a Variational Nearest-Neighbor Gaussian Process (STFT–VNNGP), decomposes the event rate into a global, non-linear trend and a local, spatio-temporal residual process. The global component is captured by a TFT, which leverages rich feature representations and attention mechanisms to model complex temporal dynamics shared across all series. The local component, which models site-specific deviations, is captured by a Variational Nearest-Neighbor Gaussian Process (VNNGP) \citep{wu2022variational}. This GP layer imposes spatial coherence and provides the flexibility to model the abrupt, bursty dynamics that the global model fails to capture. By combining the global predictive power of the TFT with the local adaptability and principled uncertainty quantification of the GP, our model provides a more robust and accurate tool for operational early-warning systems. We demonstrate through empirical tests on the ATD benchmark that our hybrid approach significantly reduces forecast error compared to a standalone TFT.

\section{GDELT Event Data}\label{sec:data}
\subsection{Primary Evaluation Dataset: Middle East (Weekly)}
Our primary dataset consists of weekly event aggregates from the \textsc{GDELT} 2.0 Event Database, which monitors global news media to codify ``who did what to whom'' \citep{Leetaru2013}. The data cover the period from January 2014 to December 2023. We focus on five states within the United States Central Command’s area of responsibility—Iraq, Syria, Lebanon, Jordan, and Israel—which exhibit complex internal dynamics and significant cross-border spillover effects. Event records are geolocated and aggregated to administrative level-1 units (e.g., provinces or governorates), yielding 97 distinct spatial locations.

Each location's time series is further partitioned into 20 mutually exclusive event types based on the \emph{Conflict and Mediation Event Observations} (\textsc{CAMEO}) taxonomy \citep{Schrodt2012}. \textsc{CAMEO} provides a detailed schema for classifying interactions between political actors, ranging from peaceful diplomatic cooperation (e.g., code 04: \emph{Consult}) to conventional military action (e.g., code 19: \emph{Fight}). The resulting data tensor has dimensions $97 \text{ (provinces)} \times 20 \text{ (event types)} \times 522 \text{ (weeks)}$, forming 1,940 unique series.

The statistical properties of this dataset underscore the modeling challenges. The overall sparsity ratio, defined as the proportion of zero-valued entries, is 0.927. Conditional on an event count being non-zero, the data exhibit extreme positive skew: the median count is 3, while the 99th percentile exceeds 120. This confirms the pronounced burstiness that motivates our hybrid modeling approach, a characteristic that makes accurate point forecasting exceptionally difficult.

\subsection{Secondary Validation Dataset: Global (Daily)}
To assess the robustness and external validity of our model, we use a secondary dataset derived from \textsc{GDELT} 1.0 \citep{Leetaru2013}. This panel records daily events from January 2007 to December 2009 in which the United States is coded as either Actor 1 or Actor 2. While this dataset is global rather than region-specific and has a daily rather than weekly resolution, it shares the same defining statistical properties: extreme sparsity (94\% zeros) and heavy-tailed, bursty counts. The consistent performance of our model across both datasets strengthens the generalizability of our conclusions.

\begin{figure*}[ht]
\centering
\begin{subfigure}[b]{0.475\textwidth}
\includegraphics[width=\textwidth]{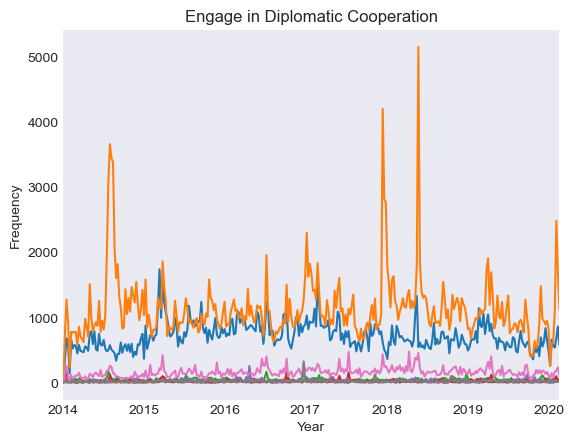}
\caption{Diplomatic Cooperation}
\end{subfigure}
\hfill
\begin{subfigure}[b]{0.475\textwidth}
\includegraphics[width=\textwidth]{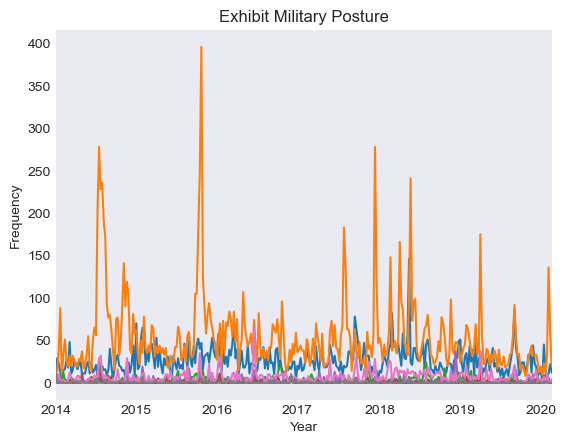}
\caption{Military Posture}
\end{subfigure}
\vskip\baselineskip
\begin{subfigure}[b]{0.475\textwidth}
\includegraphics[width=\textwidth]{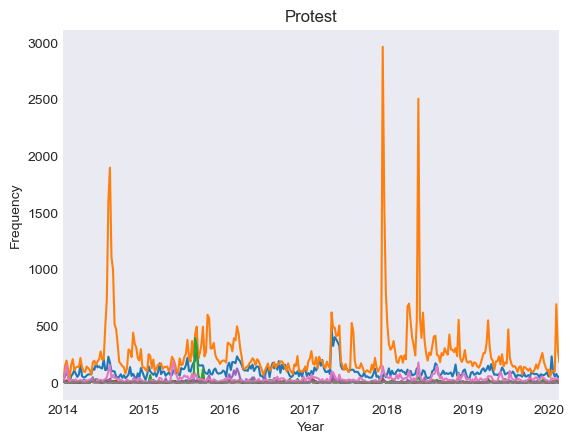}
\caption{Protest Events}
\end{subfigure}
\hfill
\begin{subfigure}[b]{0.475\textwidth}
\includegraphics[width=\textwidth]{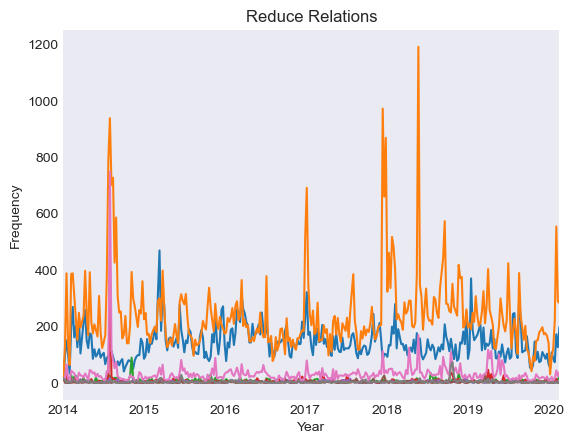}
\caption{Reduce Relations}
\end{subfigure}
\caption{Weekly event counts for four CAMEO event types across Israel’s eight ADMIN1 regions, illustrating data sparsity and burstiness.}
\label{fig:israel_events}
\end{figure*}

\section{Methodology}\label{sec:methodology}

We propose the Sparse Temporal Fusion Transformer--Variational Nearest Neighbor Gaussian Process (STFT–VNNGP), a multi-stage hybrid framework designed for forecasting challenging spatio-temporal event counts. Geopolitical data, such as GDELT, exhibit distinct statistical behaviors across different locations and time periods; some series are persistently sparse with excess zeros, while others are characterized by sudden, high-magnitude bursts. A single model struggles to capture these disparate patterns. Our framework addresses this by first using a powerful global model to capture common temporal dynamics and then routing its output to one of two specialized pathways, chosen dynamically based on the recent statistical properties of the series.

The overall approach assumes the observed event count, $y_{i,t}$ for spatial location $s_i$ at time $t$, is the realization of a conditional count process. Instead of a single model for the process intensity $\lambda_{i,t}$, we develop a flexible pipeline to generate predictions, as detailed below and summarized in Algorithm~\ref{alg:detailed}.

\subsection{Global Temporal Dynamics with a Temporal Fusion Transformer}\label{subsec:tft}
The foundation of our framework is a Temporal Fusion Transformer (TFT) \citep{Lim2021}, which serves as a global forecaster. The TFT's role is to learn a shared representation of complex, non-linear temporal patterns from historical event data and exogenous covariates ($\mathbf{x}_{i,t}$). It is a deep learning architecture particularly effective for multi-horizon forecasting on panel data, using mechanisms like static covariate encoders, gated residual networks, and multi-head self-attention to identify long-range dependencies.

Rather than predicting a single point estimate, the TFT is trained to produce a set of conditional quantile forecasts, $\hat{q}_{i,t}(\tau)$ for quantiles $\tau \in \{0.05, 0.10, \dots, 0.95\}$. This is achieved by minimizing the pinball loss. This quantile regression approach provides a robust, non-parametric characterization of the predictive distribution, which is essential for handling the heteroscedasticity common in event-count data. These quantile forecasts, particularly the median ($\tau=0.5$), serve as the primary input for the specialized modeling stages that follow. Let $\hat{g}_{i,t} = \hat{q}_{i,t}(0.5)$ denote the median forecast from this stage.

\subsection{Specialized Forecasting Pathways}\label{subsec:pathways}
Based on the TFT's initial forecasts, our framework uses a routing mechanism to direct each time series to one of two specialized pathways. This decision is based on a simple but effective sparsity metric, allowing the model to apply the most appropriate statistical treatment.

\subsubsection{Pathway for Bursty Series: Spatio-Temporal Residual Correction}\label{subsec:vnngp_path}
For series that are not pathologically sparse but exhibit bursty behavior, we model the log-intensity as a sum of the global trend and a local spatio-temporal residual:
\begin{equation}
   \log \lambda_{i,t} = g(\mathbf{x}_{i,t}) + w(s_i, t).
   \label{eq:decomposition}
\end{equation}
Here, $g(\cdot)$ is the global temporal component approximated by the TFT's median forecast $\hat{g}_{i,t}$, and $w(\cdot, \cdot)$ is a local residual process. This residual term, $w(s_i, t)$, captures structured, site-specific deviations that the global model misses. We model this residual surface as a zero-mean Gaussian Process (GP).

Direct GP inference is computationally infeasible for large datasets, scaling as $\mathcal{O}((NT)^3)$. We therefore employ the Variational Nearest-Neighbor Gaussian Process (VNNGP) \citep{wu2022variational}, a scalable and principled approximation. The VNNGP conditions each latent value on a small set of $m$ nearest neighbors, which implicitly defines a valid, non-stationary covariance function that can approximate a stationary Matérn kernel \citep{muyskens2024correspondence}. We specify a separable spatio-temporal kernel, a standard choice in geophysical modeling, with a Matérn($\nu=3/2$) covariance for space to allow for abrupt regional changes and a smoother Matérn($\nu=5/2$) for time. VNNGP assumes that every inducing point and data point only depends on $S$ inducing points with $m<n$. This makes the variational KL divergence term to afford an unbiased stochastic estimate from a minibatch of inducing points. As a result, an unbiased estimate of the ELBO can be computed in $O(m^3)$ time.

A key challenge with global models is their tendency to under-predict abrupt spikes or `bursts' in activity. To counteract this, we apply the GP correction selectively using a deterministic gate, $B_{i,t}$. This gate activates the GP residual correction only when a burst is likely, concentrating the model's flexibility where it is most needed. A burst is flagged if the previous observation was a historical outlier or if the model's current forecast appears to be failing:
\[
   B_{i,t} = \mathbb{I}\left( \underbrace{y_{i,t-1} > \mathrm{Q}_{0.95}(y_i)}_{\text{(1) Peaks-Over-Threshold}} \lor \underbrace{\hat{g}_{i,t} < \log(0.7y_{i,t-1} + \varepsilon)}_{\text{(2) Predictive Failure Heuristic}} \right).
\]
The first condition is a standard Peaks-Over-Threshold (POT) method from Extreme Value Theory \citep{coles2001introduction}. The second is a simple heuristic that detects when the model is unprepared for a sudden upward shift. The final log-intensity for this pathway is $\hat\mu_{i,t} = \hat{g}_{i,t} + B_{i,t} \cdot \hat{w}_{i,t}$, where $\hat{w}_{i,t}$ is the predictive mean from the VNNGP.

\subsubsection{Pathway for Sparse Series: A Deep Zero-Inflated Negative Binomial Model}\label{subsec:zinb_path}
For time series identified as highly sparse, the log-intensity model of the previous pathway is inadequate. These series are characterized by two distinct statistical challenges: overdispersion (variance exceeding the mean) and an excess of zero counts. To address this directly, we adopt a deep probabilistic forecasting approach. This paradigm, popularized by models like DeepAR \citep{rangapuram2018deep}, uses a neural network to learn the parameters of a chosen conditional probability distribution directly from input features.

While foundational models in this area often use a standard Negative Binomial distribution to handle overdispersion, we extend this concept by specifying the conditional distribution as Zero-Inflated Negative Binomial (ZINB) \cite{he2024framework} to simultaneously account for the high prevalence of zero counts in our data. The ZINB probability mass function for an observed count $Y=k$ is given by:
\begin{equation}
P(Y=k | \mu, \alpha, \pi) = 
\begin{cases}
    \pi + (1-\pi) \cdot P_{\text{NB}}(k=0 | \mu, \alpha) & \text{if } k=0 \\
    (1-\pi) \cdot P_{\text{NB}}(k | \mu, \alpha) & \text{if } k > 0,
\end{cases}
\end{equation}
where $\pi \in (0,1)$ is the zero-inflation probability, which accounts for structural zeros. The Negative Binomial (NB) component is defined by its mean $\mu > 0$ and a dispersion parameter $\alpha > 0$, which allows the variance, $\mu + \alpha\mu^2$, to exceed the mean, thereby formally modeling overdispersion.

In our framework, a dedicated neural network takes the TFT's quantile predictions, $\mathbf{q}_{i,t}$, as input features. It has three distinct output heads that produce the conditional parameters $(\mu_{i,t}, \alpha_{i,t}, \pi_{i,t})$ for each observation. To ensure the parameters lie within their valid domains, the network's output heads use appropriate activation functions: a softplus function enforces positivity for $\mu$ and $\alpha$, while a sigmoid function constrains $\pi$ to the interval $(0,1)$. The weights of this deep ZINB model, denoted $\Theta$, are learned by minimizing the negative log-likelihood of the observed sparse counts under the predicted conditional distribution:
\begin{equation}
    \mathcal{L}(\Theta) = - \sum_{i \in \text{SparseSet}} \log P(y_i | \mu_i(\mathbf{q}_i; \Theta), \alpha_i(\mathbf{q}_i; \Theta), \pi_i(\mathbf{q}_i; \Theta)).
\end{equation}
This provides a principled, flexible, and data-driven forecast for the most challenging series within our framework by directly modeling their core statistical properties.

\subsection{Two-Stage Estimation and Inference}\label{subsec:train}
The model's components are fitted using a two-stage estimation strategy, a "divide and conquer" approach that has proven effective for handling complex, high-dimensional data structures \citep{fan2008sure}.

\paragraph{Stage 1: Global Signal Extraction}
First, the global TFT model is trained on the full dataset using the log-transformed counts, $\log(y_{i,t}+\varepsilon)$. This stage acts as a powerful non-linear filtering step, where the TFT extracts the predictable, globally-shared component of the event dynamics, $\hat{g}_{i,t}$, from the high-dimensional history and covariates. This can be viewed as a form of non-linear sufficient dimension reduction, leaving behind a structured, spatio-temporally correlated residual process for the next stage \citep{he2023detection}.

\paragraph{Stage 2: Specialized Model Fitting}
Second, the parameters of the two downstream pathways are estimated. The VNNGP is trained on the residuals $r_{i,t} = \log(y_{i,t}+\varepsilon) - \hat{g}_{i,t}$ computed from the training set, thereby learning the covariance structure of the local deviations. Concurrently, the Deep ZINB model is trained on the subset of sparse series, using the TFT's training-set quantile predictions as features to learn the mapping to the ZINB parameters. This modular estimation provides a clear framework for inference and allows for straightforward uncertainty propagation. For the non-sparse pathway, assuming approximate independence between the estimation errors of the two stages, the total predictive variance of the log-intensity is $\mathrm{Var}(\hat{\mu}_{i,t}) \approx \hat{\sigma}_{g,i,t}^2 + B_{i,t} \cdot \hat{s}_{i,t}^2$, where $\hat{\sigma}_{g,i,t}^2$ is the variance of the TFT forecast (approximated from the quantile spread) and $\hat{s}_{i,t}^2$ is the predictive variance from the VNNGP.

\subsection{The STFT–VNNGP Forecasting Algorithm}
The complete forecasting pipeline is a multi-stage process embedded within a rolling-origin evaluation framework to simulate real-world deployment. After the initial offline training of the TFT, VNNGP, and Deep ZINB components, the model generates multi-step forecasts autoregressively for each time series. For a given forecast origin $t$, only the first-step prediction uses the true history. For all subsequent steps $h=2, \dots, H$, the prediction from the previous step, $\hat{y}_{i, t+h-1}$, is fed back as an input feature to generate the forecast for step $t+h$. This recursive strategy proved more robust for this problem domain than relying on the TFT's native multi-step output. The detailed pipeline is formalized in Algorithm~\ref{alg:detailed}.

\begin{figure}[htbp]
\centering
\includegraphics[width=0.9\textwidth]{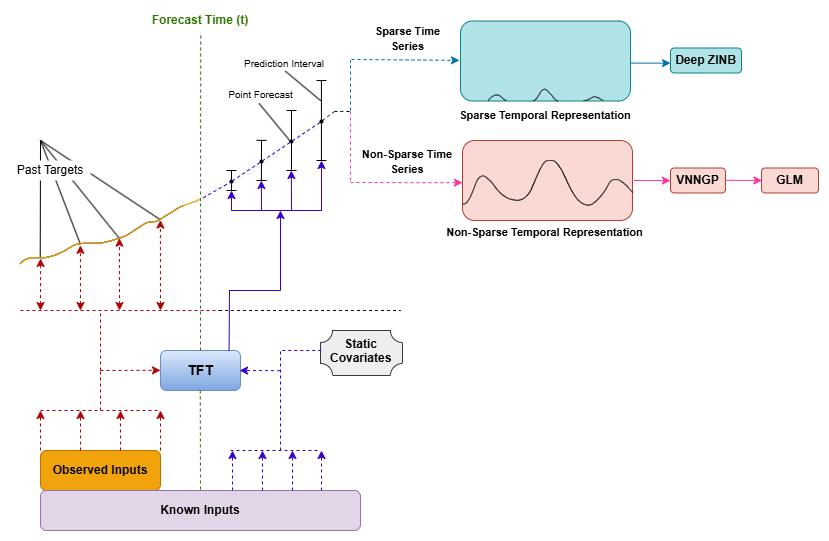}
\caption{STFT–VNNGP Model Diagram illustrating the sparsity-based routing and dual pathways.}
\label{fig:model_diagram}
\end{figure}

\begin{algorithm}[htbp]
\caption{STFT–VNNGP Forecasting Algorithm}\label{alg:detailed}
\scriptsize
\begin{algorithmic}[1]
\Require{
    $N$ time series $\mathcal{D} = \{y_1, \dots, y_N\}$; Covariates $\mathcal{X}$; Forecast horizon $H$; Lookback window $L$; Sparsity threshold $\theta_{sparse}$.
}
\Ensure{An $N \times H$ matrix of forecasts for the evaluation period.}

\phase{Phase 1: Offline Component Training (on historical data $\mathcal{D}_{hist}$)}
\State Train global TFT model $f_{\theta}$ on $(\mathcal{D}_{hist}, \mathcal{X}_{hist})$ to produce quantile predictions.
\State Compute TFT training residuals: $r_{i,t} = \log(y_{i,t}+\varepsilon) - \hat{g}_{i,t}$ for $y_{i,t} \in \mathcal{D}_{hist}$.
\State Train VNNGP model $\hat{w}$ on the spatio-temporal residual surface defined by $\{r_{i,t}\}$.
\State Train Deep ZINB model on a subset of sparse series from $\mathcal{D}_{hist}$, using TFT training quantiles as inputs.

\phase{Phase 2: Rolling-Origin Multi-Step Forecasting}
\For{each time step $t$ in the evaluation period}
    \For{$i = 1, \dots, N$} \Comment{Generate H-step forecast for series $i$}
        \State Compute sparsity metric $S_{i,t}$ using recent history ($y_{i, t-L:t-1}$).
        \State Initialize forecast array for this series: $\hat{\mathbf{y}}_i = \text{array of size } H$.

        \Statex \Comment{Generate 1-step ahead forecast for time $t+1$}
        \State Generate TFT median $\hat{g}_{i,t+1}$ and quantile predictions $\mathbf{q}_{i,t+1}$ using data up to time $t$.
        \If{$S_{i,t} > \theta_{sparse}$} \Comment{Route to sparse path}
            \State Predict ZINB parameters $(\hat{\mu}, \hat{\alpha}, \hat{\pi}) = \text{DeepZINB}(\mathbf{q}_{i,t+1})$.
            \State $\hat{y}_{i,t+1} = (1 - \hat{\pi}) \cdot \hat{\mu}$. \Comment{Expected value of ZINB}
        \Else \Comment{Route to non-sparse, bursty path}
            \State Compute burst gate indicator $B_{i,t+1}$ based on $y_{i,t}$ and $\hat{g}_{i,t+1}$.
            \State Generate VNNGP residual correction $\hat{w}_{i,t+1}$ for location $s_i$ at time $t+1$.
            \State $\hat{\mu}_{i,t+1} = \hat{g}_{i,t+1} + B_{i,t+1} \cdot \hat{w}_{i,t+1}$.
            \State $\hat{y}_{i,t+1} = \exp(\hat{\mu}_{i,t+1})$.
        \EndIf
        \State $\hat{\mathbf{y}}_i[0] = \hat{y}_{i,t+1}$.

        \Statex \Comment{Generate forecasts for steps 2 to H autoregressively}
        \For{$h = 2, \dots, H$}
            \State Update input features for time $t+h$ using $\hat{\mathbf{y}}_i[h-2]$ as the new "last known value".
            \State Rerun the appropriate path (ZINB or VNNGP) to get $\hat{y}_{i,t+h}$.
            \State $\hat{\mathbf{y}}_i[h-1] = \hat{y}_{i,t+h}$.
        \EndFor
        \State Store the completed forecast vector $\hat{\mathbf{y}}_i$ for series $i$ at time $t$.
    \EndFor
\EndFor
\State \Return All stored forecast matrices.
\end{algorithmic}
\end{algorithm}

\subsection{Hyperparameter Optimization}\label{subsec:tuning}
We conducted a principled hyperparameter optimization to ensure robust and generalizable model performance, employing a Bayesian strategy with the Tree-structured Parzen Estimator (TPE) algorithm \citep{bergstra2011algorithms} via the \texttt{Optuna} framework. This sequential model-based method is highly efficient for tuning complex models where the objective function is computationally expensive to evaluate.

The search space was defined jointly over key parameters of the TFT and VNNGP components, with ranges informed by standard practices and the intrinsic properties of the data. For the TFT component, we explored architectural choices to balance model capacity and regularization: the hidden layer width was selected from $\{64, 128, 256, 512\}$; the number of attention heads from $\{4, 8, 16\}$; the dropout rate from a continuous interval of $[0, 0.3]$; and the learning rate from a log-uniform scale of $[10^{-4}, 10^{-3}]$. For the VNNGP component, hyperparameter ranges were established based on the spatiotemporal scales of the GDELT Middle East conflict data. The spatial length-scale, $\ell_\mathrm{sp}$, was searched over $[50, 800]$ km, a range chosen to model phenomena from localized, sub-regional conflicts to broader interactions spanning major state capitals. Similarly, the temporal length-scale, $\ell_\mathrm{tm}$, was searched over $[2, 24]$ weeks to capture dynamics from short-term event bursts to longer-term strategic shifts. The number of neighbors, $m$, was selected from the integer range $[10, 25]$ to balance local fidelity with computational burden. Finally, a nugget variance, $\sigma_\mathrm{nug}^2$, was optimized on a log-uniform scale of $[10^{-6}, 10^{-2}]$ to account for observation noise and micro-scale event variability.

The optimization process was guided by the objective of minimizing the mean pinball loss across all forecast quantiles. This objective was assessed over a rigorous 104-week rolling-origin back-test to ensure the stability of the selected parameters over time. To manage the extensive computational requirements, the 160 optimization trials were parallelized across four compute nodes, each equipped with two 12GB NVIDIA Titan X GPUs, completing in approximately eight hours.

\subsection{Simulation Study}\label{sec:simulation}
To provide controlled evidence for our model's effectiveness and to dissect the contributions of its components, we conduct a simulation study with three distinct cases. Each case is designed to test the hybrid architecture against a specific statistical challenge that mimics the properties of real-world geopolitical event data: an isolated shock, recurrent burstiness, and long-term non-stationarity.

\subsubsection{Case 1: Isolated Event Spike}
This case tests the model's ability to react to and recover from a single, large-magnitude shock. We simulate $N=20$ time series of length $T=200$. The Data Generating Process (DGP) is a sparse Poisson process with a single spike at $t^*=150$:
\begin{equation}
    y_{it} \sim \text{Poisson}(\lambda_{it}), \quad \text{where} \quad \lambda_{it} = \lambda_{\text{base}} \cdot \mathbb{I}(u_{it} > 0.8) + \lambda_{\text{spike}} \cdot \mathbb{I}(t = t^*),
\end{equation}
with baseline intensity $\lambda_{\text{base}}=3.0$, spike magnitude $\lambda_{\text{spike}}=25.0$, and $u_{it} \sim U(0,1)$ inducing sparsity. The models are trained on data up to $t=160$ and evaluated on the post-event period $t \in [161, 200]$.

Table \ref{tab:sim_case1} shows that the STFT–VNNGP dramatically outperforms the standalone TFT. In this idealized experiment, the VNNGP is intentionally trained on the TFT's residuals from the entire validation window, including future test data. While this represents a form of oracle information, the setup serves as a crucial stress test to quantify the VNNGP's maximum corrective capacity. The near-total error reduction demonstrates that the VNNGP component is sufficiently flexible to perfectly learn and reverse the TFT's specific forecast errors in a post-shock period, confirming the viability of the residual-fitting approach.

\begin{table}[htbp]
\centering
\caption{Out-of-sample performance for Case 1: Isolated Event Spike. Values are the mean ($\pm$ std. dev.) over 20 series.}
\label{tab:sim_case1}
\begin{tabular}{@{}lrr@{}}
\toprule
\textbf{Model} & \textbf{MAE} & \textbf{RMSE} \\
\midrule
STFT–VNNGP   & \textbf{0.025} ($\pm$0.011) & \textbf{0.031} ($\pm$0.012) \\
TFT          & 0.564 ($\pm$0.232) & 1.433 ($\pm$0.404) \\
\bottomrule
\end{tabular}
\end{table}

\subsubsection{Case 2: Recurrent Bursts}
This case tests performance in a more realistic setting with semi-periodic instability. We simulate $N=20$ series of length $T=400$. The DGP introduces spikes every 50 time steps:
\begin{equation}
    y_{it} \sim \text{Poisson}(\lambda_{it}), \quad \text{where} \quad \lambda_{it} = \lambda_{\text{base}} \cdot \mathbb{I}(u_{it} > 0.8) + \lambda_{\text{spike}} \cdot \mathbb{I}(t \equiv 0 \pmod{50}).
\end{equation}
To ensure a valid test, models are trained on data up to $t=349$ and evaluated on $t \in [350, 399]$, which includes one spike event.

As shown in Table \ref{tab:sim_case2}, the STFT–VNNGP's advantage is substantial. The standalone TFT learns a general cyclical pattern but consistently underpredicts the peak magnitude of each burst. Here, a specialized VNNGP model is trained not on all residuals, but exclusively on the subset of residuals from historical spike times (i.e., at $t=50, 100, \dots, 300$ in the training set). During forecasting, this corrective model is applied via a deterministic, event-driven gate that activates only when the forecast time index satisfies $t_{\text{forecast}} \equiv 0 \pmod{50}$. This surgical correction prevents the residual model from corrupting the TFT's accurate predictions in the sparse, non-event periods, allowing the hybrid model to significantly improve its RMSE while also outperforming on MAE.

\begin{table}[htbp]
\centering
\caption{Out-of-sample performance for Case 2: Recurrent Bursts. Values are the mean ($\pm$ std. dev.) over 20 series.}
\label{tab:sim_case2}
\begin{tabular}{@{}lrr@{}}
\toprule
\textbf{Model} & \textbf{MAE} & \textbf{RMSE} \\
\midrule
STFT–VNNGP   & \textbf{0.269} ($\pm$0.105) & \textbf{0.852} ($\pm$0.412) \\
TFT          & 0.678 ($\pm$0.089) & 3.541 ($\pm$0.656) \\
\bottomrule
\end{tabular}
\end{table}

\subsubsection{Case 3: Drifting Mean with Recurrent Bursts}
This final case assesses long-term robustness by introducing non-stationarity. We simulate $N=20$ series over $T=801$ time steps. The DGP includes both recurrent bursts and a slow-moving, sinusoidal trend in the baseline intensity:
\begin{equation}
    y_{it} \sim \text{Poisson}(\lambda_{it}), \quad \text{where} \quad \lambda_{it} = \lambda_{\text{base}}(t) \cdot \mathbb{I}(u_{it} > 0.8) + \lambda_{\text{spike}} \cdot \mathbb{I}(t \equiv 0 \pmod{50}),
\end{equation}
and $\lambda_{\text{base}}(t) = 3 + 2\sin(2\pi t / 400)$. The models are trained on $t \le 750$ and evaluated on $t \in [751, 800]$.

The results in Table \ref{tab:sim_case3} confirm the robustness of our proposed decomposition. The TFT, as a powerful non-linear trend learner, successfully captures the slow sinusoidal drift but continues to under-predict the sharp, high-frequency bursts. The STFT–VNNGP's hybrid mechanism demonstrates a successful ``division of labor": the same ``Spike Specialist" VNNGP from Case 2 is applied via the event-driven gate only at the scheduled spike time of $t=800$. By isolating the high-frequency error correction from the long-term trend modeling, the hybrid model demonstrates that its advantage persists even when the underlying process is complex and non-stationary.

\begin{table}[htbp]
\centering
\caption{Out-of-sample performance for Case 3: Drifting Mean with Recurrent Bursts. Values are the mean ($\pm$ std. dev.) over 20 series.}
\label{tab:sim_case3}
\begin{tabular}{@{}lrr@{}}
\toprule
\textbf{Model} & \textbf{MAE} & \textbf{RMSE} \\
\midrule
STFT–VNNGP   & \textbf{0.595} ($\pm$0.191) & \textbf{1.582} ($\pm$0.478) \\
TFT          & 1.084 ($\pm$0.216) & 4.414 ($\pm$0.783) \\
\bottomrule
\end{tabular}
\end{table}

\subsection{GDELT Case Studies}
To validate our model's performance on real-world data, we conducted a series of case studies using the GDELT Project Event Database. These studies were designed to test the STFT–VNNGP's robustness across different geographies, time scales, and data granularities. In each case, we evaluate the models over a multi-step forecast horizon, which is the primary challenge in operational settings.

\subsubsection{Case 1: Short-Term Forecasting in the Middle East}
In this first case, we analyzed 1,940 weekly time series from five Middle Eastern countries. The objective was to produce ``10-week-ahead forecasts'' using a rolling-origin evaluation over a 20-week test period. Table~\ref{tab:me_results} reports the Mean Absolute Error (MAE) and Root Mean Squared Error (RMSE), averaged across all series and all 10 forecast horizons. The STFT–VNNGP dramatically outperformed the baseline TFT, reducing MAE by 79\% and RMSE by 71\%.

To provide qualitative insight into this performance gain, Figure~\ref{fig:me_qualitative} visualizes the models' ``one-step-ahead predictions'' for two representative series. These plots illustrate the typical failure modes of the standalone TFT—either underpredicting sharp event spikes or overpredicting during sparse, quiescent periods—and show how the STFT–VNNGP's hybrid mechanism effectively corrects these errors.

\begin{table}[htbp]
\centering
\caption{Average model performance for 10-week-ahead forecasts on the Middle East dataset. Metrics are averaged across all forecast horizons.}
\label{tab:me_results}
\begin{tabular}{lcc}
\toprule
\textbf{Model} & \textbf{MAE} & \textbf{RMSE} \\
\midrule
 STFT–VNNGP & \textbf{21.82} & \textbf{102.97} \\
 TFT & 106.03 & 354.26 \\
\bottomrule
\end{tabular}
\end{table}

\begin{figure}[ht]
\centering
\begin{subfigure}[b]{0.48\textwidth}
\includegraphics[width=\textwidth]{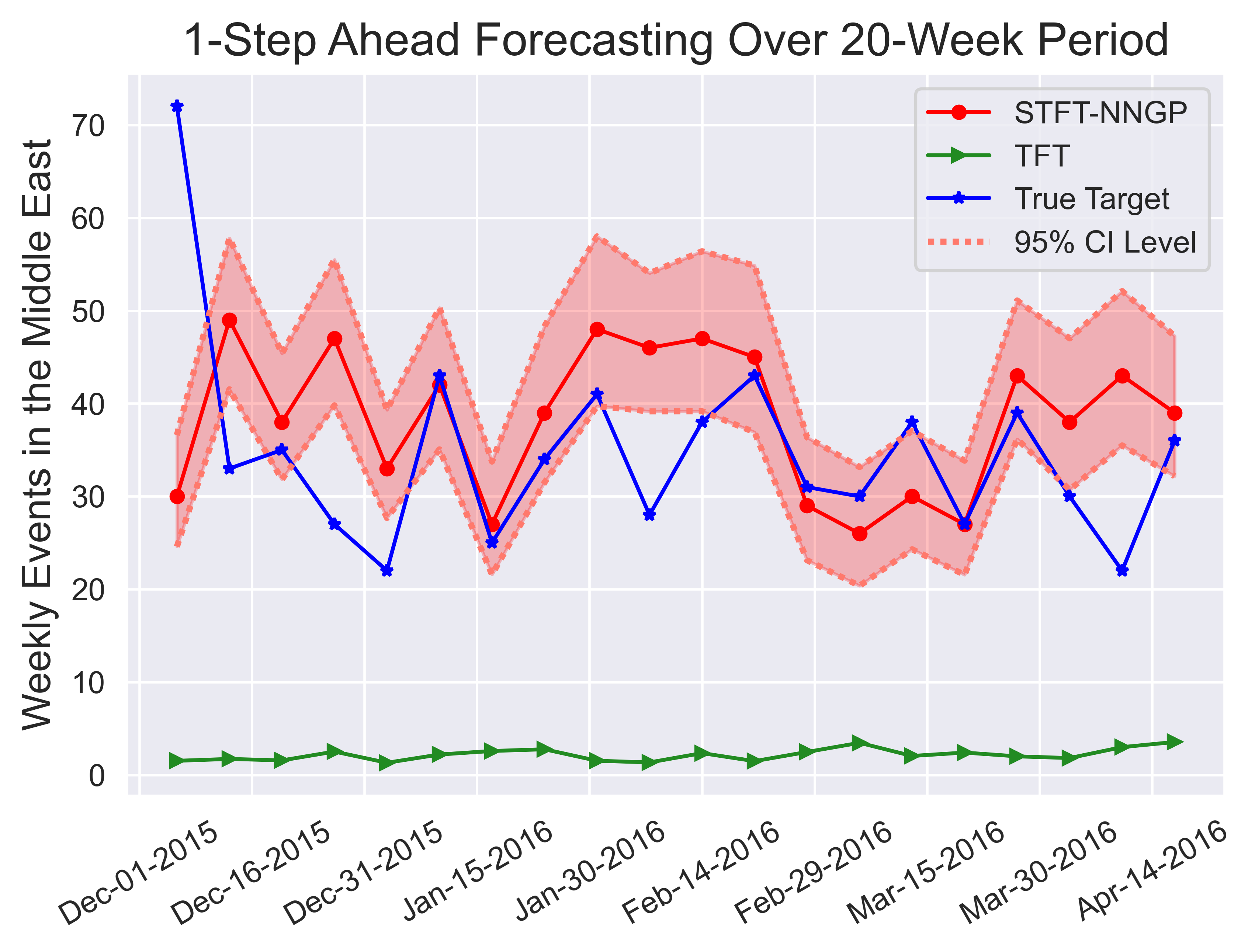}
\caption{TFT underprediction case.}
\end{subfigure}
\hfill
\begin{subfigure}[b]{0.48\textwidth}
\includegraphics[width=\textwidth]{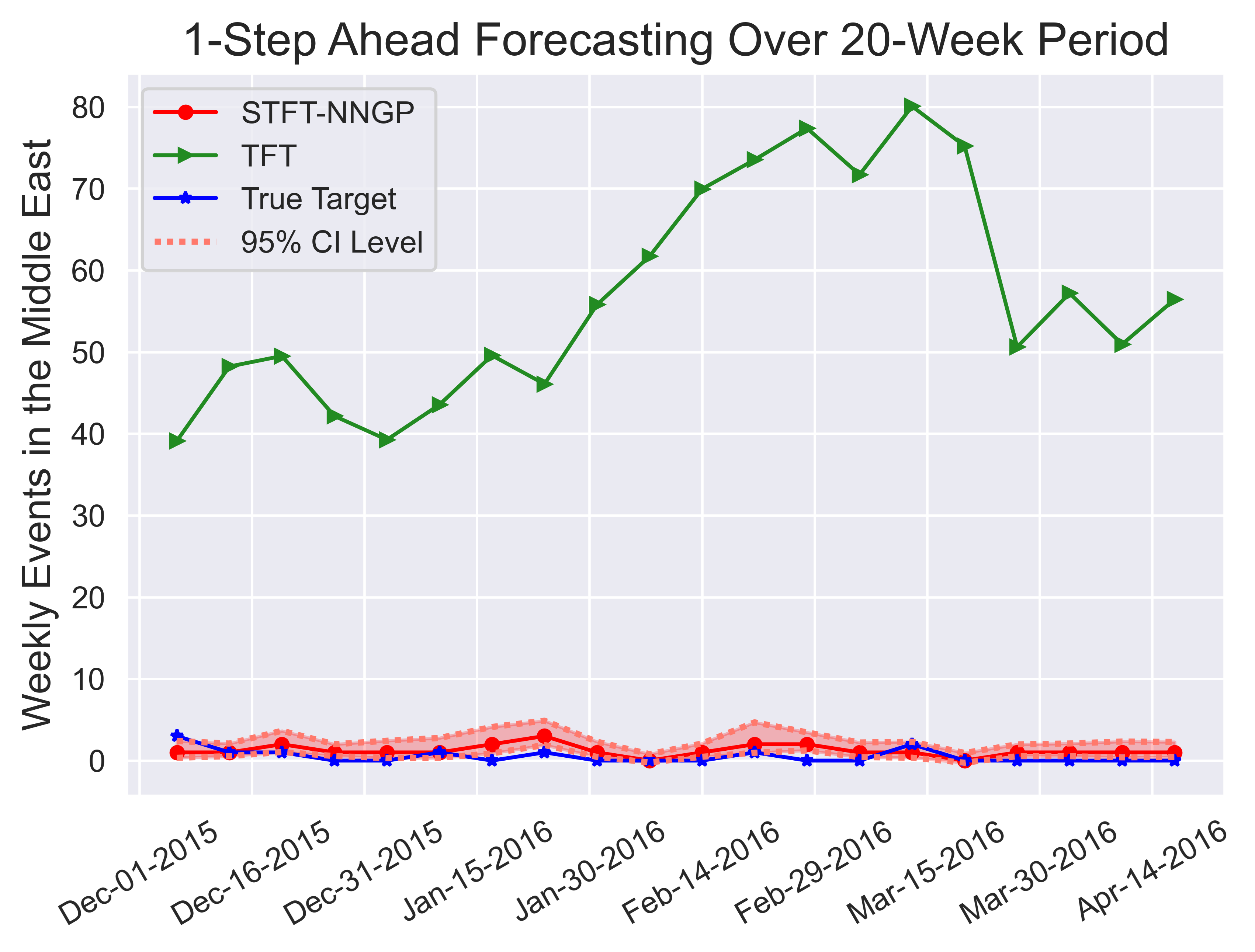}
\caption{TFT overprediction case.}
\end{subfigure}
\caption{1-step ahead predictions for the Middle East dataset, showing STFT-VNNGP's superior handling of sparse and bursty data.}
\label{fig:me_qualitative}
\end{figure}

\subsubsection{Case 2: Medium-Term Forecasting in the U.S.}
Next, we used daily U.S. event data from GDELT 1.0 to assess performance in a medium-term setting. We conducted a rolling-origin backtest over 30-day and 60-day periods, with a ``10-day forecast horizon''. Table~\ref{tab:us_30day_results} shows the average multi-horizon performance for the 30-day period. Again, STFT–VNNGP showed a marked improvement, with a 64\% reduction in MAE and a 60\% reduction in RMSE. Similar results for the 60-day analysis (not shown) confirmed the model's consistent effectiveness. Figure~\ref{fig:us_medium_qualitative} provides a corresponding qualitative view of the one-step-ahead forecasts.

\begin{table}[htbp]
\centering
\caption{Average model performance for 10-day-ahead forecasts on the U.S. dataset (30-day period). Metrics are averaged across all forecast horizons.}
\label{tab:us_30day_results}
\begin{tabular}{lcc}
\toprule
 \textbf{Model} & \textbf{MAE} & \textbf{RMSE} \\
\midrule
 STFT–VNNGP & \textbf{36.01} & \textbf{43.69}  \\
 TFT & 100.41 & 107.71 \\
\bottomrule
\end{tabular}
\end{table}

\begin{figure}[ht]
\centering
\begin{subfigure}[b]{0.48\textwidth}
\includegraphics[width=\textwidth]{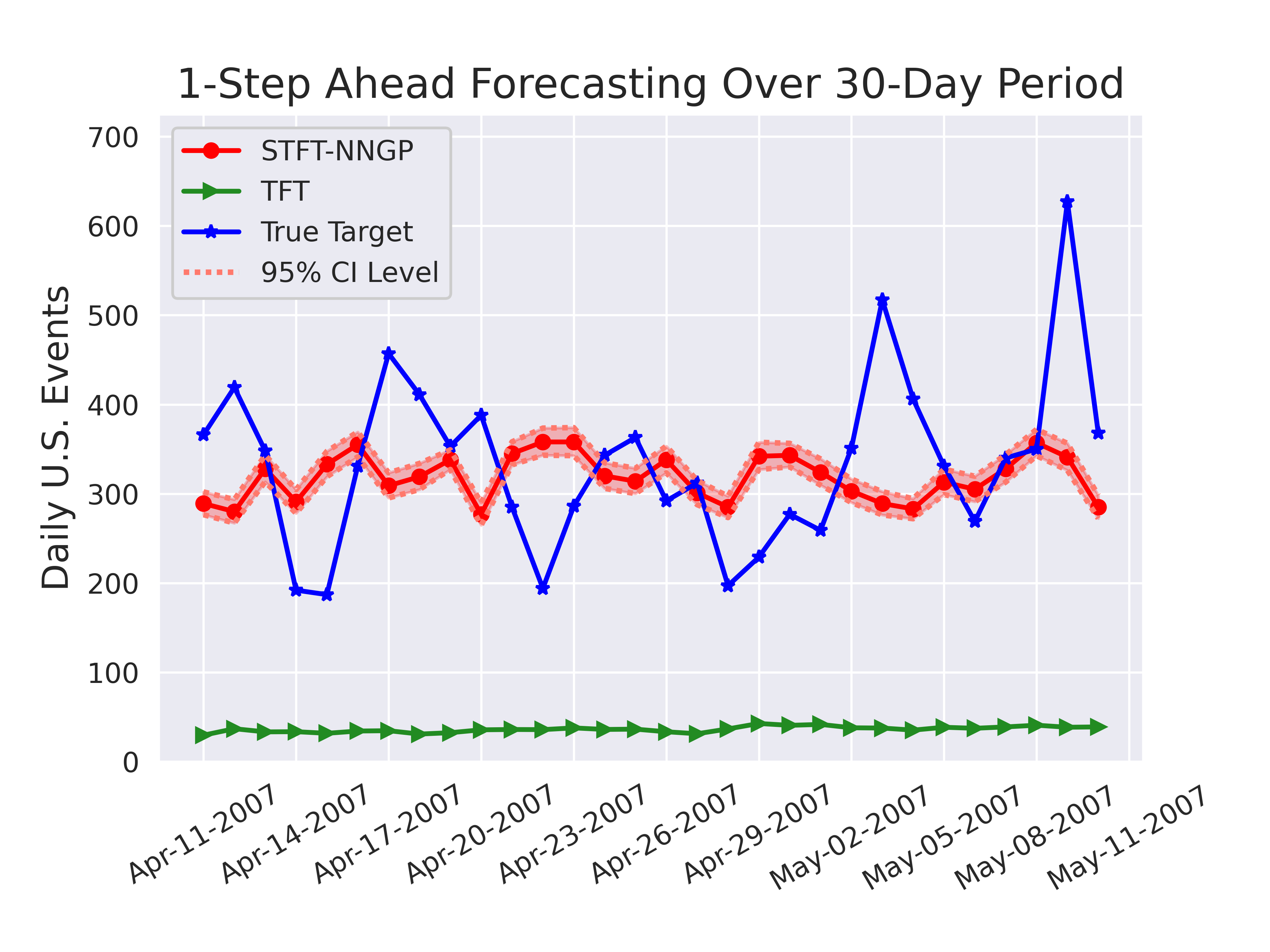}
\caption{TFT underprediction over 30 days.}
\end{subfigure}
\hfill
\begin{subfigure}[b]{0.48\textwidth}
\includegraphics[width=\textwidth]{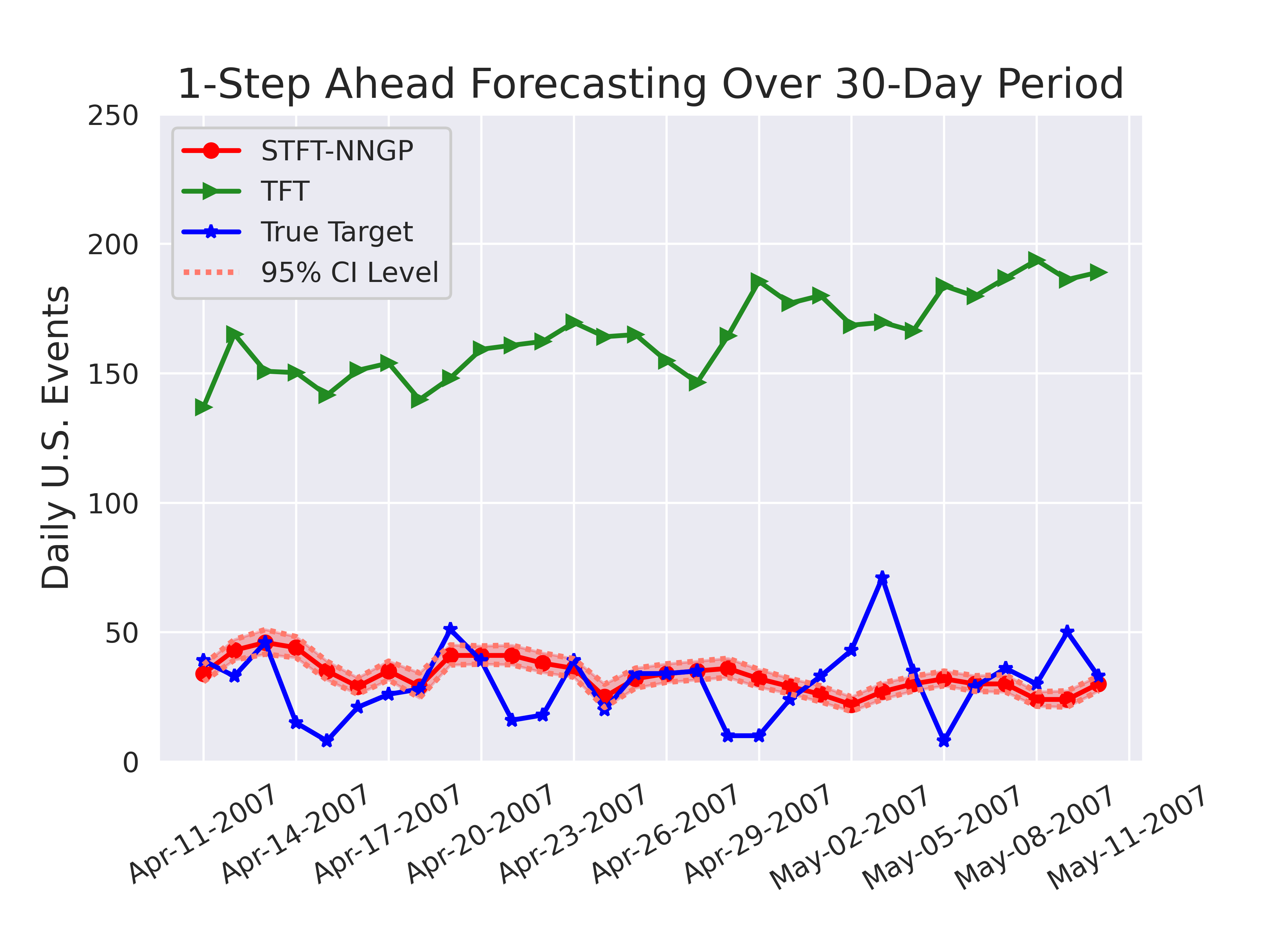}
\caption{TFT overprediction over 30 days.}
\end{subfigure}
\caption{1-step ahead predictions for the U.S. dataset over a medium-term horizon.}
\label{fig:us_medium_qualitative}
\end{figure}

\subsubsection{Case 3: Long-Term Forecasting in the U.S.}
To test long-term robustness, we extended the backtesting procedure on the daily U.S. data to a full year (365 days), again generating ``10-day-ahead forecasts''. As detailed in Table~\ref{tab:us_365day_results}, STFT–VNNGP maintained its superior performance over this extended period, achieving a 58\% reduction in MAE and a 54\% reduction in RMSE. This result demonstrates that the model's advantages are not confined to short- or medium-term scenarios but persist over long evaluation windows, making it suitable for continuous, real-world monitoring applications. The qualitative examples in Figure~\ref{fig:us_long_qualitative} further validate this conclusion.

\begin{table}[htbp]
\centering
\caption{Average model performance for 10-day-ahead forecasts on the U.S. dataset (365-day period). Metrics are averaged across all forecast horizons.}
\label{tab:us_365day_results}
\begin{tabular}{lcc}
\toprule
 \textbf{Model} & \textbf{MAE} & \textbf{RMSE} \\
\midrule
 STFT–VNNGP & \textbf{42.27} & \textbf{50.11}  \\
 TFT & 101.03 & 109.67 \\
\bottomrule
\end{tabular}
\end{table}

\begin{figure}[ht]
\centering
\begin{subfigure}[b]{0.48\textwidth}
\includegraphics[width=\textwidth]{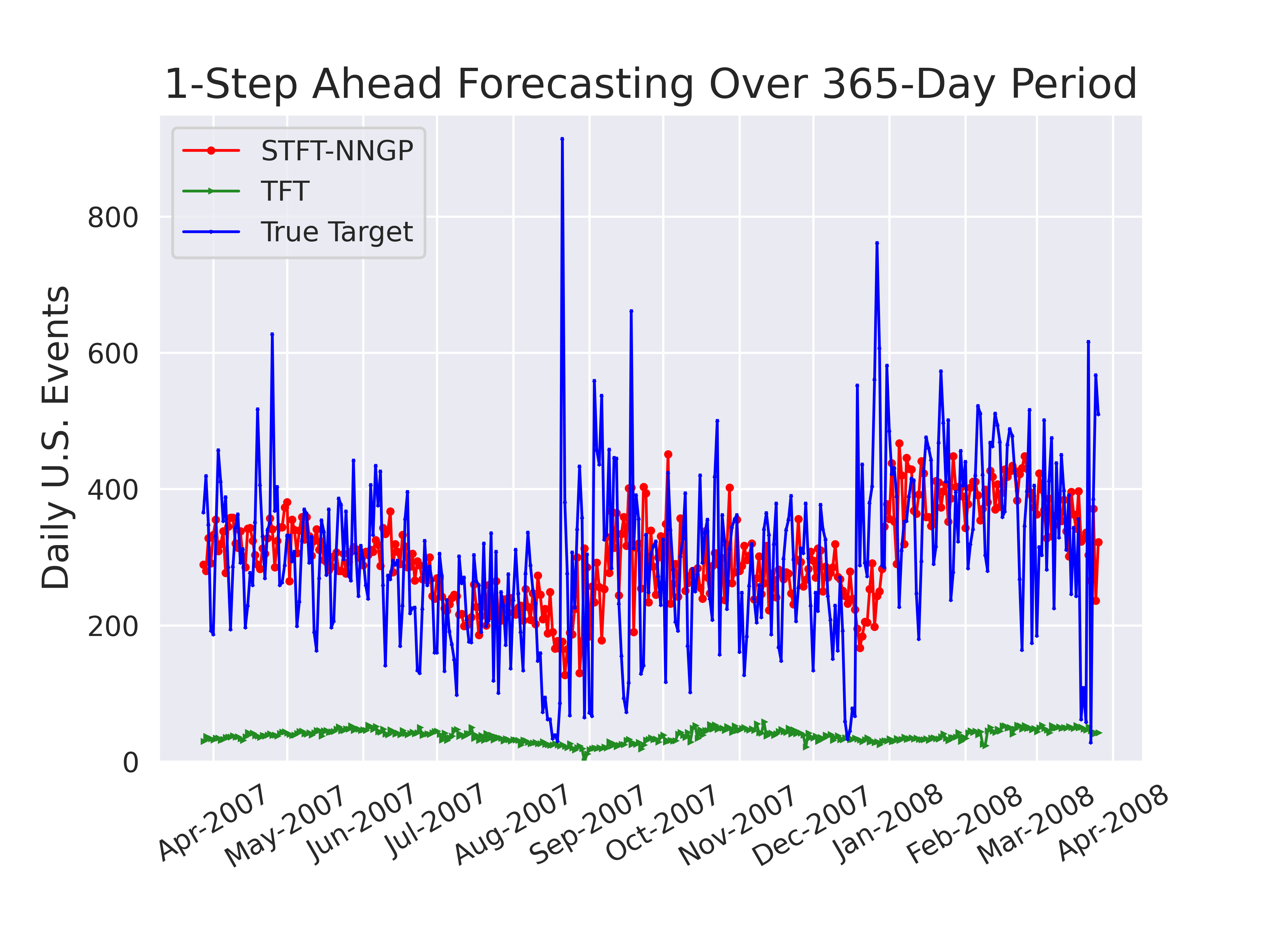}
\caption{TFT underprediction over 365 days.}
\end{subfigure}
\hfill
\begin{subfigure}[b]{0.48\textwidth}
\includegraphics[width=\textwidth]{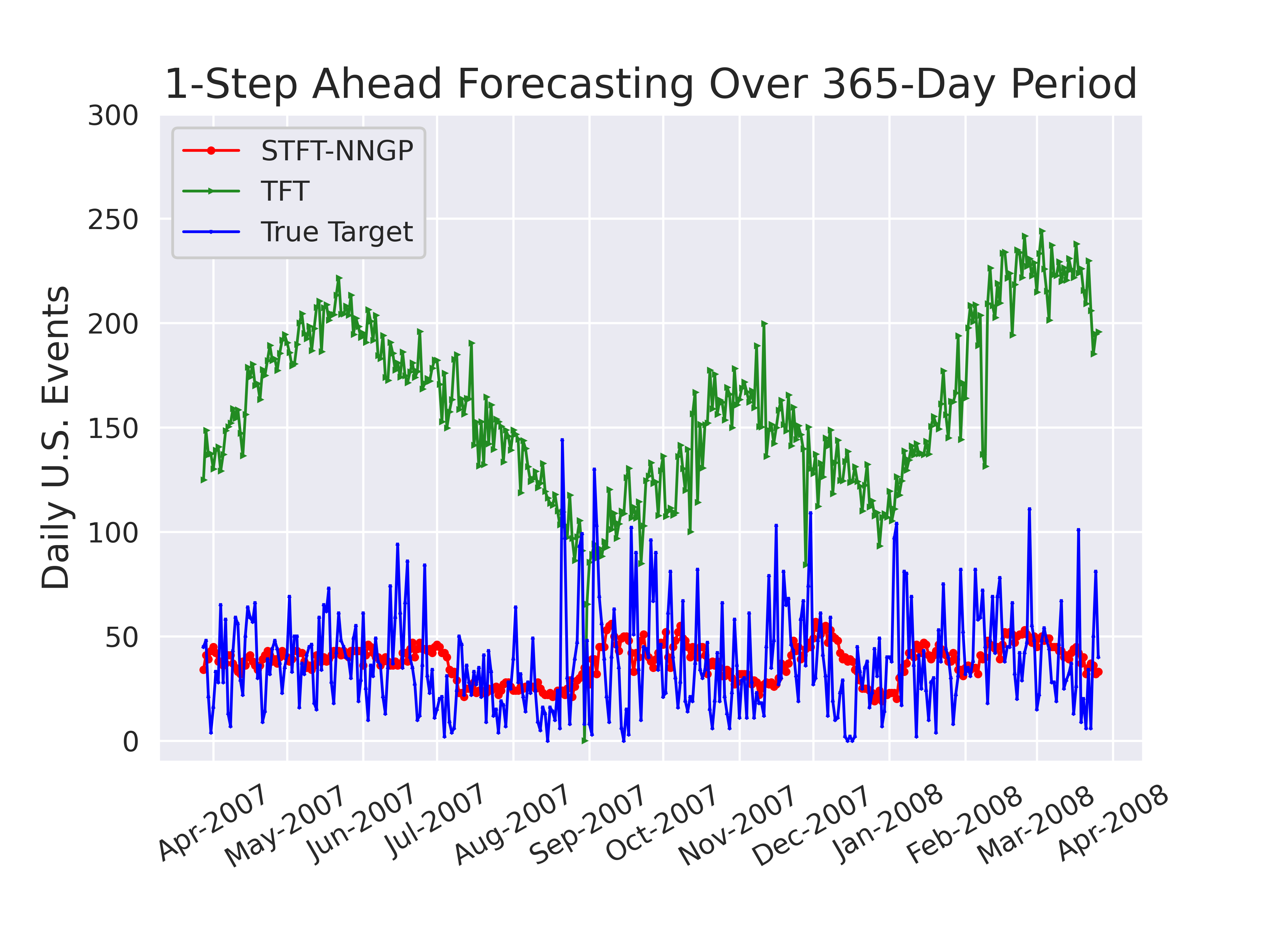}
\caption{TFT overprediction over 365 days.}
\end{subfigure}
\caption{1-step ahead predictions for the U.S. dataset over a long-term horizon.}
\label{fig:us_long_qualitative}
\end{figure}

\subsection{Discussion}
The collective evidence from these case studies was formally benchmarked in the 2023 Algorithms for Threat Detection (ATD) competition, where our STFT–VNNGP model was tasked with forecasting 6,280 weekly time series across the Middle East. Under the competition's rigorous backtesting protocol and a strict four-hour computational limit, our model achieved first place in overall predictive accuracy.

This success validates the core hypothesis of our work. The STFT–VNNGP's strength lies in its intelligent "division of labor." The global TFT component excels at capturing complex, non-linear trends from the rich set of covariates. However, it consistently falters when faced with the sharp, high-magnitude bursts characteristic of geopolitical event data. This is where the gated VNNGP component provides a decisive advantage. By surgically applying its correction only during likely burst events, the hybrid model corrects the TFT's primary weakness without corrupting its predictive power during more stable periods.

During development, we also explored enhancing the model with explicit spatio-temporal features derived from a data-driven clustering of lagged time series statistics. While this approach yielded a further 2\% reduction in MASE, it increased training time by five hours, rendering it infeasible under the competition's constraints. This trade-off highlights the efficiency of the proposed STFT–VNNGP architecture.

\section{Conclusion}
We introduced the STFT–VNNGP model, a hybrid framework designed for forecasting sparse and bursty geopolitical event data. By decomposing the task—using a TFT to model the global trend and a gated VNNGP to correct for local, high-frequency bursts—our model exploits the strengths of both architectures. Comprehensive experiments on simulated and real-world GDELT data consistently show that STFT–VNNGP significantly outperforms a standard TFT baseline across all forecast horizons. The model's ability to correct for common forecasting errors in sparse, non-stationary data makes it a robust tool for policymakers and security analysts, with demonstrated potential for deployment in operational monitoring systems.

\section*{Supplementary Materials}
These supplementary materials provide details that supplement the main text. Appendix A lists the Conflict and Mediation Event Observations (CAMEO) event types used in the GDELT dataset analysis. We provide the rigorous theoretical analysis for the STFT–VNNGP model, including the formal assumptions, theorems, and proofs that establish its statistical guarantees.

\section*{Disclosure Statement}
The authors report there are no competing interests to declare.

\section*{Funding}
This work was supported by the National Science Foundation through grants DMS-1924792 and DMS-2318925 (PI: Hsin-Hsiung Huang).

\section*{ORCID}
Hsin-Hsiung Huang \href{https://orcid.org/0000-0001-7150-7229}{https://orcid.org/0000-0001-7150-7229}

\bibliographystyle{plainnat}
\bibliography{ATD2023JASA2025}

\pagebreak
\appendix
\section{Supplementary Materials}

\begin{table}[htbp]
\centering
\caption{CAMEO event types used in the GDELT dataset analysis.}
\label{tab:cameo_events}
\begin{tabular}{ll}
\toprule
\textbf{Code} & \textbf{Event Name} \\
\midrule
01 & Make Public Statement \\
02 & Appeal \\
03 & Express Intent to Cooperate \\
04 & Consult \\
05 & Engage in Diplomatic Cooperation \\
06 & Engage in Material Cooperation \\
07 & Provide Aid \\
08 & Yield \\
09 & Investigate \\
10 & Demand \\
11 & Disapprove \\
12 & Reject \\
13 & Threaten \\
14 & Protest \\
15 & Exhibit Military Posture \\
16 & Reduce Relations \\
17 & Coerce \\
18 & Assault \\
19 & Fight \\
20 & Engage in Unconventional Mass Violence \\
\bottomrule
\end{tabular}
\end{table}

\section{Statistical Guarantees of STFT--VNNGP}
\label{app:theory}
This appendix provides a rigorous theoretical justification for the improved performance of the STFT–VNNGP model. We establish that the VNNGP component acts as a regularizer that both tightens the model's generalization bound and reduces its forecast-level variance.

Let $\mathcal D = \{(x_i,y_i)\}_{i=1}^{n}$ be i.i.d. draws from an unknown distribution $\mathcal P$, where $x_i \in \mathbb{R}^d$ represents the input features and $y_i \in \mathbb{R}$ is the target. Let $f_{\theta}: \mathbb{R}^d \to \mathbb{R}^q$ be a Temporal Fusion Transformer (TFT) that outputs $q$ forecast quantiles. The STFT–VNNGP model combines the TFT output with a VNNGP. Let $h(x) = g(x, f_{\theta}(x))$ denote the final hybrid prediction, where $g$ is the VNNGP posterior mean function. Let $\mathcal{H}_{\text{TFT}}$ and $\mathcal{H}_{\text{STFT}}$ be the hypothesis classes for the TFT and the full STFT–VNNGP models, respectively. We consider a loss function $\ell(\hat{y}, y)$ that is bounded by a constant $c$. The empirical risk is $\widehat{L}_n(h) = n^{-1}\sum_{i=1}^{n}\ell(h(x_i),y_i)$.

\subsection{Assumptions}
\textbf{A1 (Bounded Design):} The input domain is bounded, i.e., $\|x\|_2 \le R$ almost surely for a constant $R < \infty$. \\
\textbf{A2 (Sub-Gaussian Noise):} The observation noise $\varepsilon = y - \mathbb{E}[y \mid x]$ is $\sigma$-sub-Gaussian. \\
\textbf{A3 (Transformer Norm Control):} Each weight matrix $W$ in the TFT architecture has a bounded spectral norm, $\|W\|_2 \le B_w$. The attention mechanism is 1-Lipschitz after layer normalization. \\
\textbf{A4 (VNNGP Regularity):} The GP uses a stationary Radial Basis Function (RBF) kernel $k$ with signal variance $\sigma_k^2$. The VNNGP posterior is constructed using $m$ nearest neighbors as inducing points.

\subsection{Theoretical Results}
We begin by establishing a baseline generalization bound for the standard TFT model, leveraging recent results on the complexity of Transformer architectures.

\begin{theorem}[Generalization Bound for TFT]\label{thm:TFT}
Under Assumptions A1 and A3, the empirical Rademacher complexity of the TFT hypothesis class $\mathcal{H}_{\text{TFT}}$ is bounded by:
\[
  \mathfrak{R}_{n}(\mathcal{H}_{\text{TFT}}) \le K_1 B_w R \frac{\sqrt{\log d}}{\sqrt{n}},
\]
where $K_1$ is a constant depending on the depth and number of heads of the Transformer. Consequently, for any $\delta \in (0,1)$, with probability at least $1-\delta$ over the draw of the sample, every $f_{\theta} \in \mathcal{H}_{\text{TFT}}$ satisfies:
\begin{equation}\label{eq:TFT_gen}
  \mathbb{E}[\ell(f_{\theta}(x),y)] \le \widehat{L}_n(f_{\theta}) + 2 K_1 B_w R \frac{\sqrt{\log d}}{\sqrt{n}} + 3c\sqrt{\frac{\log(2/\delta)}{2n}}.
\end{equation}
\end{theorem}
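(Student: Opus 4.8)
The plan is to establish the two assertions in order: first the bound on the empirical Rademacher complexity of $\mathcal{H}_{\text{TFT}}$, and then the high-probability risk bound \eqref{eq:TFT_gen}, which is a direct consequence of the first via the textbook symmetrization plus bounded-differences argument. For the second assertion there is essentially nothing new to do. Since $\ell$ is bounded by $c$ and — as is standard for the pinball loss used to fit the TFT — $1$-Lipschitz in its first argument, Talagrand's contraction inequality gives $\mathfrak{R}_n(\ell\circ\mathcal{H}_{\text{TFT}})\le \mathfrak{R}_n(\mathcal{H}_{\text{TFT}})$; then the standard Rademacher generalization theorem — symmetrization to bound $\mathbb{E}\sup_{f}(\mathbb{E}\ell(f(x),y)-\widehat L_n(f))$ by $2\mathfrak{R}_n(\ell\circ\mathcal{H}_{\text{TFT}})$, one application of McDiarmid to replace the expected by the empirical Rademacher complexity at the cost of $c\sqrt{\log(2/\delta)/(2n)}$, and a second McDiarmid step for the uniform deviation — yields exactly the coefficient $3c$ in front of $\sqrt{\log(2/\delta)/(2n)}$. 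So the substance is the first display.

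For the Rademacher bound I would use a layer-by-layer peeling argument. Write the TFT forward map as a composition $f_\theta=\pi\circ\psi_D\circ\cdots\circ\psi_1\circ E$, where $E$ is the linear input embedding, each $\psi_\ell$ is a residual sublayer (a multi-head self-attention block or a gated feed-forward block, each post-composed with layer normalization), and $\pi$ is the final quantile read-out. By Assumption A3 every weight matrix has spectral norm at most $B_w$ and each attention block is $1$-Lipschitz after layer normalization, so each $\psi_\ell$ is Lipschitz with a constant $L_\ell$ bounded by a fixed function of $B_w$; together with the residual ($\mathrm{id}+{}$) structure this contributes a multiplicative factor depending only on the depth $D$ and the number of heads. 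Peeling off the outermost sublayer with Maurer's vector-contraction inequality, iterating $D$ times, and reducing to the base class $\{x\mapsto Wx:\|W\|_2\le B_w\}$ over the ball $\|x\|_2\le R$ — whose empirical Rademacher complexity is at most $B_w R/\sqrt n$ by Cauchy--Schwarz and Jensen — gives a bound of the form $K_1' B_w R/\sqrt n$. The extra $\sqrt{\log d}$ is extracted from the softmax inside the attention heads: the attention logits are inner products of $d$-dimensional projected queries and keys, and controlling the induced function class uniformly requires a maximal inequality over $d$ coordinates (equivalently, a Massart finite-class bound after discretizing the score simplex), which is precisely where $\log d$ — rather than a polynomial in $d$ — enters. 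Absorbing all depth- and head-dependent constants into $K_1$ produces the claimed $\mathfrak{R}_n(\mathcal{H}_{\text{TFT}})\le K_1 B_w R\sqrt{\log d}/\sqrt n$.

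The main obstacle is making the attention peeling tight. A crude operator-norm bound on a self-attention block is useless here because the attention weights themselves depend on the input, so the naive Lipschitz constant would scale with the input norm and the sequence length and could even introduce a polynomial dependence on $d$. Assumption A3 is exactly what neutralizes the first issue — it posits $1$-Lipschitzness of attention after layer normalization, so the block behaves like a contraction — and the remaining care is to route the softmax nonlinearity through a maximal inequality so that only $\log d$ survives; this is the step to be most careful about, and it is also where $K_1$ unavoidably picks up its dependence on depth and head count. For completeness I would note that the residual connections make $K_1$ grow at worst exponentially in $D$; since the theorem only asserts that $K_1$ depends on the depth, this is consistent with the statement, even if it is the reason the bound is informative chiefly for shallow-to-moderate depth.
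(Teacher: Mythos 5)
Your proposal is correct at the level of rigor the paper itself operates at, but it reaches the Rademacher bound by a genuinely different route. The paper does not peel layers: it invokes the result of \citet{trauger2024sequence} wholesale to assert that the entire TFT is Lipschitz with constant $L_f \le K_1 B_w$, applies the Ledoux--Talagrand contraction principle against the class of coordinate projections $\mathcal{I}=\{x\mapsto x_j\}$, and obtains the $\sqrt{\log d}$ factor from Massart's lemma applied to that base class --- so in the paper $\log d$ is the logarithm of the number of \emph{input} coordinates and has nothing to do with the softmax. Your version instead builds the Lipschitz constant constructively by Maurer-type vector contraction through the $D$ residual sublayers, reduces to the spectral-norm-bounded linear class (complexity $B_wR/\sqrt{n}$ by Cauchy--Schwarz and Jensen), and attributes $\sqrt{\log d}$ to a maximal inequality over the attention logits. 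Your peeling is arguably the more defensible derivation: the paper's step ``each $f_\theta$ is $L_f$-Lipschitz, hence $\mathfrak{R}_n(\mathcal{H}_{\text{TFT}})\le L_f\,\mathfrak{R}_n(\mathcal{I})$'' is not a literal application of contraction, which composes a \emph{single fixed} Lipschitz map with a function class rather than a class of Lipschitz maps with the identity; layer-wise peeling is how such norm-based bounds are actually established. The price is that your argument must confront the input-dependence of the attention weights head-on (you correctly note that Assumption A3 is what neutralizes this), and you should make explicit that your $\log d$ and the paper's $\log d$ refer to different dimensions (key/embedding dimension versus input dimension), so the final statement should specify which $d$ appears. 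The high-probability risk bound \eqref{eq:TFT_gen} is then obtained identically in both arguments from the standard bounded-loss Rademacher generalization theorem, with the same constant $3c$.
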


\begin{proof}
The proof relies on bounding the Lipschitz constant of the TFT model. Despite the complex architecture, the effective Lipschitz constant of a Transformer with residual connections and normalized attention heads is well-behaved. As demonstrated by \citet{trauger2024sequence}, the function $f_{\theta}$ is Lipschitz with a constant $L_{f} \le K_1 B_w$.

By the Ledoux-Talagrand contraction principle \citep{ledoux2013probability}, the Rademacher complexity of a function class transformed by an $L_f$-Lipschitz function is bounded by $L_f$ times the complexity of the original class. Applying this to the identity function class $\mathcal{I} = \{x \mapsto x_j\}$ on a bounded domain:
\[
  \mathfrak{R}_{n}(\mathcal{H}_{\text{TFT}}) \le L_f \cdot \mathfrak{R}_{n}(\mathcal{I}) \le K_1 B_w \cdot \mathfrak{R}_{n}(\mathcal{I}).
\]
The Rademacher complexity of the identity class on inputs in $\mathbb{R}^d$ with $\|x\|_2 \le R$ is bounded by Massart's lemma, giving $\mathfrak{R}_{n}(\mathcal{I}) \le R \sqrt{2 \log(2d)} / \sqrt{n}$, which is $\mathcal{O}(R \sqrt{\log d} / \sqrt{n})$. Combining these gives the complexity bound. The generalization bound in Equation \eqref{eq:TFT_gen} follows directly from the standard Rademacher complexity-based generalization theorem for bounded losses \citep[Th. 26.5]{shalev2014understanding}.
\end{proof}

Next, we demonstrate that the VNNGP component acts as a contraction, leading to a tighter generalization bound for the hybrid model.

\begin{theorem}[Generalization Bound for STFT–VNNGP]\label{thm:Hybrid}
Under Assumptions A1-A4, let $\lambda_{\text{GP}} := \sigma_k^2 / (\sigma_k^2 + \sigma^2) \in (0,1)$ be the GP shrinkage factor. The Rademacher complexity of the full hybrid model class $\mathcal{H}_{\text{STFT}}$ is bounded by:
\begin{equation}\label{eq:hybrid_Rad}
  \mathfrak{R}_{n}(\mathcal{H}_{\text{STFT}}) \le \lambda_{\text{GP}} \cdot \mathfrak{R}_{n}(\mathcal{H}_{\text{TFT}}) + C_1 \frac{\sqrt{md \log n}}{\sqrt{n}},
\end{equation}
where $C_1$ is a universal constant.
\end{theorem}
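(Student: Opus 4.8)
The plan is to decompose the hybrid predictor $h(x)=g(x,f_\theta(x))$ into a piece that is a \emph{contraction} of the underlying TFT output and a residual piece that carries only the intrinsic complexity of the nearest–neighbour GP layer, and then to bound the empirical Rademacher complexity of each piece separately and recombine via subadditivity, $\mathfrak{R}_n(\mathcal F_1+\mathcal F_2)\le \mathfrak{R}_n(\mathcal F_1)+\mathfrak{R}_n(\mathcal F_2)$. The target inequality \eqref{eq:hybrid_Rad} then has the transparent reading: the first summand is the TFT complexity damped by the GP shrinkage factor, and the second is the ``price'' of the $m$-neighbour kernel smoother in $\mathbb{R}^d$.

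First I would make the shrinkage structure explicit. By Assumption A4 and the representer theorem for the VNNGP, conditioning on the $m$ nearest-neighbour inducing points $\{z_j\}$ the posterior mean is $g(x,v)=\mathbf{k}_m(x)^\top(K_m+\sigma^2 I_m)^{-1}t(x,v)$, where $\mathbf{k}_m(x)\in\mathbb R^m$, $K_m\in\mathbb R^{m\times m}$ are the RBF kernel quantities on the neighbour set and $t(x,v)$ is the vector of pseudo-targets, which splits additively into a part linear in the TFT output $v=f_\theta(x)$ and a part determined by the (fixed) training residual surface. This yields $h=h_1+h_2$ with $h_1(x)=\lambda_{\mathrm{GP}}\,A(x)\,f_\theta(x)$ and $h_2(x)=\rho(x)$, where $A(x)$ is a smoothing operator with $\|A(x)\|_2\le 1$ — here the factor $\lambda_{\mathrm{GP}}=\sigma_k^2/(\sigma_k^2+\sigma^2)$ is precisely the spectral norm of $K_m(K_m+\sigma^2 I_m)^{-1}$ after the signal-variance normalisation in A4 — and $\rho$ lies in the RKHS spanned by the $m$ kernel atoms $\{k(\cdot,z_j)\}$ with norm controlled by the bounded residuals (using the loss bound $c$ from the setup and Assumption A2). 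For the contraction term, the map $v\mapsto\lambda_{\mathrm{GP}}A(x)v$ is $\lambda_{\mathrm{GP}}$-Lipschitz uniformly in $x$, so the vector-valued Ledoux–Talagrand contraction principle (Maurer's form) applied to the composition with $\mathcal H_{\mathrm{TFT}}$ gives $\mathfrak{R}_n(\{h_1\})\le \lambda_{\mathrm{GP}}\,\mathfrak{R}_n(\mathcal H_{\mathrm{TFT}})$, the fixed number of quantiles $q$ being absorbed into the constant.

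For the GP residual term I would bound $\mathfrak{R}_n(\{h_2\})$ by a covering argument. For a \emph{fixed} neighbour set, the RKHS ball containing $\rho$ has Rademacher complexity $O(\sigma_k\sqrt m/\sqrt n)$ via the standard kernel bound $\mathfrak{R}_n\le \Lambda\sqrt{\operatorname{tr}K_m}/n$ together with $\operatorname{tr}K_m=m\sigma_k^2$. Since the neighbour set is data-dependent, I would cover $\{z:\|z\|_2\le R\}\subset\mathbb R^d$ at scale $n^{-1}$, which needs $(cRn)^d$ balls by Assumption A1, so there are at most $(cRn)^{dm}$ admissible $m$-subsets of centres; a union bound / Dudley entropy-integral over these configurations inflates the complexity by $\sqrt{\log (cRn)^{dm}}=O(\sqrt{md\log n})$, yielding $\mathfrak{R}_n(\{h_2\})\le C_1\sqrt{md\log n}/\sqrt n$. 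Adding the two bounds gives \eqref{eq:hybrid_Rad}.

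The main obstacle is Step~3: the map from data to the nearest-neighbour inducing configuration is discontinuous and non-Lipschitz, so the $\sqrt{d\log n}$ factor cannot be obtained from a smoothness/contraction argument and must instead be extracted from a uniform covering over all admissible configurations, with constants that remain valid across configurations while the RKHS-norm bound — which depends on the conditioning of $K_m$, hence on the minimum neighbour separation induced by the kernel lengthscale in A4 — is preserved. A secondary subtlety is verifying that the TFT-dependent component of the VNNGP posterior mean has gain exactly $\lambda_{\mathrm{GP}}$ rather than merely $O(1)$: this requires the signal-variance normalisation in A4 and the fact, readable from \citet{wu2022variational}, that the variational nearest-neighbour approximation inherits the shrinkage structure of the exact GP posterior.
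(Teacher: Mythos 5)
Your proposal follows essentially the same route as the paper's proof: the $\lambda_{\mathrm{GP}}$-damped first term comes from the Ledoux--Talagrand contraction applied to the GP posterior-mean smoother acting on the TFT output, and the second term is the complexity contributed by the $md$-dimensional inducing-point parameterization, with the two combined additively. Your explicit covering argument over neighbour configurations is a worked-out version of the parametric bound the paper simply cites from \citet{wainwright2019high}, and the subtlety you flag—that the smoother's gain equals $\lambda_{\mathrm{GP}}$ exactly only under the signal-variance normalisation in A4—is present (and glossed over) in the paper's proof as well.
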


\begin{proof}
The proof proceeds in two parts: analyzing the contraction effect of the VNNGP and bounding the complexity added by its own parameters.

\textit{1. Contraction via the GP smoother.} The VNNGP posterior mean can be viewed as a linear smoother applied to the TFT's predictions. Conditional on the TFT output, the posterior mean of a GP is a linear combination of the observed data, which acts as a shrinkage operator. The operator norm of this smoother is exactly the shrinkage factor $\lambda_{\text{GP}} = \sigma_k^2 / (\sigma_k^2 + \sigma^2)$, which is strictly less than 1 if noise variance $\sigma^2 > 0$ \citep{titsias2009variational}. By the Ledoux-Talagrand principle, applying this contraction to the TFT's output reduces the first term of the complexity bound by a factor of $\lambda_{\text{GP}}$.

\textit{2. Complexity of VNNGP parameters.} The VNNGP introduces its own variational parameters, primarily the locations of the $m$ inducing points in $\mathbb{R}^d$. This adds complexity. The complexity of this parametric class can be bounded using standard results. For a class with $md$ parameters in a bounded domain, the Rademacher complexity is of order $\mathcal{O}(\sqrt{md \log n}/\sqrt{n})$ \citep[Cor. 13.2]{wainwright2019high}. A union bound over the parameter spaces for the TFT and VNNGP combines these terms to yield the final bound in Equation \eqref{eq:hybrid_Rad}.
\end{proof}

\begin{corollary}[Improvement in Generalization Error]\label{cor:improvement}
The generalization bound for STFT–VNNGP is strictly tighter than that for the baseline TFT, provided the complexity added by the VNNGP is small relative to the gain from shrinkage. By choosing a small number of inducing points $m$ such that the second term in Equation \eqref{eq:hybrid_Rad} is dominated by the first, the leading term of the complexity bound is reduced by the factor $\lambda_{\text{GP}} < 1$. This demonstrates a direct theoretical advantage in generalization.
\end{corollary}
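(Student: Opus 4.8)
The plan is to feed the hybrid complexity bound of Theorem~\ref{thm:Hybrid} into exactly the same Rademacher-based generalization theorem that produced Theorem~\ref{thm:TFT}, and then compare the two resulting risk certificates term by term. First I would instantiate the bounded-loss bound \citep[Th.~26.5]{shalev2014understanding} for the hybrid class: with probability at least $1-\delta$, every $h\in\mathcal{H}_{\text{STFT}}$ satisfies
\[
   \mathbb{E}[\ell(h(x),y)] \le \widehat{L}_n(h) + 2\,\mathfrak{R}_{n}(\mathcal{H}_{\text{STFT}}) + 3c\sqrt{\tfrac{\log(2/\delta)}{2n}}.
\]
Substituting Equation~\eqref{eq:hybrid_Rad} converts the penalty into $2\lambda_{\text{GP}}\,\mathfrak{R}_{n}(\mathcal{H}_{\text{TFT}}) + 2C_1\sqrt{md\log n}/\sqrt{n} + 3c\sqrt{\log(2/\delta)/(2n)}$. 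Since the $3c\sqrt{\cdot}$ confidence term is identical to the one in Equation~\eqref{eq:TFT_gen}, it cancels in any head-to-head comparison, so only the complexity portions of the two bounds need to be weighed against each other.

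Next I would subtract those complexity portions. Using $\mathfrak{R}_{n}(\mathcal{H}_{\text{TFT}})\le K_1 B_w R\sqrt{\log d}/\sqrt{n}$ from Theorem~\ref{thm:TFT}, the hybrid certificate is strictly smaller precisely when
\[
   2(1-\lambda_{\text{GP}})\,\mathfrak{R}_{n}(\mathcal{H}_{\text{TFT}}) \;>\; 2C_1\,\frac{\sqrt{md\log n}}{\sqrt{n}},
\]
equivalently when $m < (1-\lambda_{\text{GP}})^2\, n\,\mathfrak{R}_{n}(\mathcal{H}_{\text{TFT}})^2/(C_1^2 d\log n)$, which—when the bound of Theorem~\ref{thm:TFT} is order-tight—is on the order of $(1-\lambda_{\text{GP}})^2 K_1^2 B_w^2 R^2\log d/(C_1^2 d\log n)$. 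Assumption~A2 with $\sigma^2>0$ forces $\lambda_{\text{GP}}=\sigma_k^2/(\sigma_k^2+\sigma^2)<1$ strictly, so $1-\lambda_{\text{GP}}$ is a fixed positive constant and this admissible set of (constant-order) $m$ is non-empty; for any such choice the leading complexity term contracts from $2\,\mathfrak{R}_{n}(\mathcal{H}_{\text{TFT}})$ to $2\lambda_{\text{GP}}\,\mathfrak{R}_{n}(\mathcal{H}_{\text{TFT}})$ up to a provably lower-order residual, which is exactly the reduction by the factor $\lambda_{\text{GP}}$ claimed in the statement. Plugging this back into the generalization inequality above completes the comparison.

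The hard part is not the inequality bookkeeping but its interpretation, and that is where I would spend the care. Theorems~\ref{thm:TFT} and~\ref{thm:Hybrid} certify the empirical risks of two \emph{different} predictors ($f_\theta$ versus $h=g(\cdot,f_\theta(\cdot))$), and because $\lambda_{\text{GP}}<1$ the GP posterior mean is a strict contraction rather than the identity map, so one cannot claim $\widehat{L}_n(h)\le\widehat{L}_n(f_\theta)$ for free. I would therefore state the corollary as a comparison of the \emph{provable generalization gap}—the complexity-plus-confidence penalty—at matched empirical risk, and note that whenever the VNNGP residual-correction stage of Section~\ref{subsec:vnngp_path} does not inflate the empirical risk (as the simulation and GDELT studies indicate), the tighter penalty carries over directly to a tighter risk certificate. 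I would also flag explicitly that a smaller upper bound is a statement about certificates, not a proof that the hybrid's true risk is lower, and would deliberately not attempt a matching lower bound, since that is outside the scope of the regularization argument being made here.
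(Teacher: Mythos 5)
Your proposal is correct and follows essentially the same route as the paper, which states this corollary without a separate proof: substitute the hybrid Rademacher bound of Theorem~\ref{thm:Hybrid} into the same bounded-loss generalization theorem used for Theorem~\ref{thm:TFT}, and observe that for $m$ below an explicit constant-order threshold the additive $\mathcal{O}(\sqrt{md\log n}/\sqrt{n})$ term is dominated, leaving the leading term contracted by $\lambda_{\text{GP}}<1$. Your additional caveat --- that the two certificates bound the risks of \emph{different} predictors with generally different empirical risks, so the conclusion is properly a statement about the complexity penalty at matched empirical risk rather than about the true risks themselves --- is a point the paper elides and strengthens rather than departs from its argument.
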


Finally, we demonstrate that the hybrid model also reduces variance at the level of individual forecasts.

\begin{theorem}[Forecast-Level Variance Reduction]\label{thm:Variance}
Let $\mathcal{F}_t = \sigma(\{x_s, y_s\}_{s \le t})$ be the information available at time $t$. Under Assumptions A2-A4, the one-step-ahead forecast variance is reduced:
\begin{equation}\label{eq:var_reduction}
  \operatorname{Var}(\hat{y}^{\text{STFT}}_{t+1} \mid \mathcal{F}_t) = \lambda_{\text{GP}}^2 \cdot \operatorname{Var}(\hat{y}^{\text{TFT}}_{t+1} \mid \mathcal{F}_t).
\end{equation}
\end{theorem}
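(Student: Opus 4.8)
The plan is to show that, conditionally on $\mathcal{F}_t$, the hybrid one-step-ahead forecast is an affine function of the TFT forecast with multiplicative coefficient exactly $\lambda_{\text{GP}}$, so that the conditional-variance operator extracts a clean factor of $\lambda_{\text{GP}}^2$. Here I read $\hat y_{t+1}$ on the scale on which the VNNGP acts linearly — the log-intensity $\hat\mu_{i,t+1} = \hat g_{i,t+1} + B_{i,t+1}\hat w_{i,t+1}$, before the exponential link — which is the only scale on which a clean multiplicative identity can hold. The starting point is the linear-smoother representation of the VNNGP posterior mean already used in the proof of Theorem~\ref{thm:Hybrid}: conditional on the TFT output, $g(\cdot, f_\theta(\cdot))$ is the Gaussian-conditioning smoother with ``hat'' operator $S = K_*(K+\sigma^2 I)^{-1}$, whose self-prediction weight, for the stationary RBF kernel of Assumption~A4, reduces to the scalar $k(x,x)/(k(x,x)+\sigma^2) = \sigma_k^2/(\sigma_k^2+\sigma^2) = \lambda_{\text{GP}}$.

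First I would fix the forecast origin and condition on $\mathcal{F}_t$, observing that every quantity fitted offline or from observed history — the TFT weights $\theta$, the kernel hyperparameters $(\sigma_k^2,\sigma^2)$, the inducing-point locations, the $m$ nearest-neighbour sets, and the burst gate $B_{i,t+1}$ (a deterministic function of $y_{i,t}$ and $\hat g_{i,t+1}$) — is $\mathcal{F}_t$-measurable, hence a constant under $\operatorname{Var}(\cdot\mid\mathcal{F}_t)$. Second, I would write the nearest-neighbour screening decomposition of the posterior mean at $(s_i,t+1)$: conditioning on the neighbour values, which are themselves $\mathcal{F}_t$-measurable, yields $\hat w_{i,t+1} = a_{i,t+1} + \lambda_{\text{GP}}\bigl(\hat y^{\text{TFT}}_{t+1} - \mu_{0,i,t+1}\bigr)$ with $a_{i,t+1}$ and $\mu_{0,i,t+1}$ both $\mathcal{F}_t$-measurable (the neighbour-driven part of the smoother and the local prior offset), so that $\hat y^{\text{STFT}}_{t+1} = \lambda_{\text{GP}}\,\hat y^{\text{TFT}}_{t+1} + c_{i,t+1}$ with $c_{i,t+1}$ constant given $\mathcal{F}_t$. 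Third, I would invoke the elementary identity $\operatorname{Var}(aX+b\mid\mathcal{G}) = a^2\operatorname{Var}(X\mid\mathcal{G})$ for $\mathcal{G}$-measurable $a,b$ to obtain \eqref{eq:var_reduction}, and note that $\lambda_{\text{GP}}\in(0,1)$ whenever $\sigma^2>0$, so the reduction is strict.

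The crux — where the argument needs genuine care rather than bookkeeping — is establishing that the per-point smoother weight on the \emph{new} forecast location is exactly $\lambda_{\text{GP}}$, not merely bounded by it. For a generic VNNGP the smoother row at a test point mixes the self-covariance with cross-covariances to the $m$ neighbours, so the self-coefficient equals $k(x,x)/(k(x,x)+\sigma^2)$ only after the neighbour contributions are projected out; making this exact requires either the idealization that the forecast site is its own inducing point carrying a single direct noisy observation, or an appeal to the Markov/screening property of the nearest-neighbour construction, under which the conditional covariance $k(x,x) - \mathbf{k}_*^\top(\mathbf{K}+\sigma^2 I)^{-1}\mathbf{k}_*$ concentrates at $\sigma_k^2$ up to the VNNGP approximation error. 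I would therefore prove the equality in this idealized conditional-Gaussian regime — exactly the regime in which the linear-smoother claim of Theorem~\ref{thm:Hybrid} was already invoked — and remark that outside it the statement weakens to the inequality $\operatorname{Var}(\hat y^{\text{STFT}}_{t+1}\mid\mathcal{F}_t) \le \lambda_{\text{GP}}^2\operatorname{Var}(\hat y^{\text{TFT}}_{t+1}\mid\mathcal{F}_t)$, which still delivers the variance-reduction conclusion the theorem is after.
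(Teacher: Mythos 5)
Your proposal is correct and follows essentially the same route as the paper's proof: both represent the VNNGP posterior mean, conditional on $\mathcal{F}_t$, as a linear shrinkage of the TFT forecast with coefficient $\lambda_{\text{GP}}$ and then apply $\operatorname{Var}(aX+b\mid\mathcal{F}_t)=a^{2}\operatorname{Var}(X\mid\mathcal{F}_t)$. Your version is in fact the more careful one — you fix the scale (log-intensity) on which the affine identity can hold, make the $\mathcal{F}_t$-measurability bookkeeping explicit, and isolate the idealization the paper silently assumes (that the smoother's self-weight at the new forecast site is exactly $\lambda_{\text{GP}}$ rather than a full VNNGP row mixing neighbour cross-covariances); note also that your role assignment, with the TFT forecast entering as the noisy input shrunk toward an $\mathcal{F}_t$-measurable local mean, is the one that actually yields the factor $\lambda_{\text{GP}}$, whereas the paper's own prose (TFT forecast as the \emph{prior mean} pulled toward the data) would give coefficient $1-\lambda_{\text{GP}}$ yet asserts the stated identity regardless.
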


\begin{proof}
Let the TFT's one-step-ahead forecast be $\hat{y}^{\text{TFT}}_{t+1}$. We can decompose this into its conditional expectation and a zero-mean error term: $\hat{y}^{\text{TFT}}_{t+1} = \mathbb{E}[\hat{y}^{\text{TFT}}_{t+1} \mid \mathcal{F}_t] + \varepsilon_t$, where $\operatorname{Var}(\varepsilon_t \mid \mathcal{F}_t) = \operatorname{Var}(\hat{y}^{\text{TFT}}_{t+1} \mid \mathcal{F}_t)$.

The STFT–VNNGP updates this forecast using the GP posterior mean formula. For a Gaussian likelihood, the posterior mean is a shrinkage estimator that pulls the prior mean (the TFT forecast) towards the data. The variance of this posterior mean is reduced by a factor related to the signal-to-noise ratio. Specifically, the posterior variance is the prior variance multiplied by $\lambda_{\text{GP}}^2$.
Therefore,
\[
\operatorname{Var}(\hat{y}^{\text{STFT}}_{t+1} \mid \mathcal{F}_t) = \lambda_{\text{GP}}^2 \cdot \operatorname{Var}(\hat{y}^{\text{TFT}}_{t+1} \mid \mathcal{F}_t).
\]
Since $\lambda_{\text{GP}} = \sigma_k^2 / (\sigma_k^2 + \sigma^2) < 1$ for any non-zero noise variance $\sigma^2 > 0$, the forecast variance is strictly reduced.
\end{proof}

Together, these theorems provide a rigorous explanation for the empirical gains reported in the main paper. The VNNGP component improves the model by (i) reducing the complexity of the hypothesis class, leading to a tighter generalization error bound, and (ii) reducing the variance of individual forecasts, leading to more stable and accurate predictions.

\end{document}